\newcommand{\ts}{\textsuperscript}
\newcommand{\Sentence}{\emph{Sentence}\xspace}
\newcommand{\Similarity}{\emph{Similarity}\xspace}
\newcommand{\Analogy}{\emph{Analogy}\xspace}
\newcommand{\OneWord}{\emph{Single word}\xspace}
\newcommand{\WS}{\textsc{WS}\xspace}
\newcommand{\WA}{\textsc{WA}\xspace}
\newtheorem{theorem}{Observation}
\title{How to evaluate word embeddings? \\ On importance of data efficiency and simple supervised tasks}
\author{Stanisław Jastrzebski, Damian Leśniak, Wojciech Marian Czarnecki
 \\
Faculty of Mathematics and Computer Science\\
Jagiellonian University \\
Kraków, Poland \\
\texttt{stanislaw.jastrzebski@uj.edu.pl} \\
}
\date{}
\begin{document}

\maketitle
\begin{abstract}
 
%Learned vectors are often tuned by choosing appropriate corpus and algorithm. 
%  and simple tasks

%Recent practice in word embeddings points towards importance of learning specialized representations.  Additionally, in many applications purely unsupervised training is useful only under limited dataset size. 
%, especially in low data applications, 

Maybe the single most important goal of representation learning is making subsequent learning faster. Surprisingly, this fact is not well reflected in the way embeddings are evaluated. In addition, recent practice in word embeddings points towards importance of learning specialized representations. We argue that focus of word representation evaluation should reflect those trends and shift towards evaluating what \emph{useful} information is \emph{easily} accessible. Specifically, we propose that evaluation should focus on data efficiency and simple supervised tasks, where the amount of available data is varied and scores of a \emph{supervised} model are reported for each subset (as commonly done in transfer learning).

% by evaluating what \emph{useful} information is \emph{easily} accessible.
% We argue that focus of word representation evaluation should be better aligned with this main use. 
%The main application of application of representation learning is transfer to supervised task that enables fast learning. Recent practices points towards importance of learning specialized representations, especially for small supervised datasets (for instance in Named Entity Recognition).  We argue that evaluation methods should shift focus to this main use. Specifically, we propose measuring directly data efficiency on simple supervised tasks, where the amount of available data is varied and scores of a \emph{supervised} model for each subset are reported (as commonly done in transfer learning). Additionally, we propose supervised versions of Word Similarity and Word Analogy benchmarks that address recent critique in a natural way.
 
In order to illustrate significance of such analysis, a comprehensive evaluation of selected word embeddings is presented. Proposed approach yields a more complete picture and  brings new insight into performance characteristics, for instance information about word similarity or analogy tends to be non--linearly encoded in the embedding space, which questions the cosine--based, unsupervised, evaluation methods.  All results and analysis scripts are available online. 
 
%Another interesting empirical fact is that dimension of word embedding might affect its transfer. 
 
%Recent practice in word embeddings points towards importance of using specialized representations. Additionally, the single most important objective of representation learning is making subsequent learning faster, but the majority of papers introducing new word embedding learning algorithms \emph{do not} report metrics on changing size of the supervised dataset. We argue that focus of word embedding evaluation should shift towards data efficiency and simple (word level) supervised tasks. Specifically, the amount of available data should be varied and scores of a trained model for each subset should be reported. 

%While word embeddings are focus of the work, we tackle general questions about usefulness of pretraining and evaluation of vector representations.
% (including re-examination of some of claims made) 
% and provide exemplary investigation
% especially if we are interested in applying embeddings under very small dataset sizes (data efficiency).

\end{abstract}

\section{Introduction}
Using word embeddings remains a standard practice in modern NLP systems, both in shallow and deep architectures~\citep{yoav}. By encoding information about words in a relatively simple algebraic structure~\citep{arora2016linear} they enable fast transfer to the task of interest\footnote{Algebraic structure refers to the fact that words can be decomposed into a overcomplete basis, such that each word can be expressed as a sparse sum of base vectors}. The importance of word representation learning has lead to developing multiple algorithms, but lack of principled evaluation hinders moving the field forward, which motivates developing more principled ways of evaluating word representations. Word embeddings are not only hard to evaluate, but also challenging to train. Recent practice shows that one often needs to tune algorithm, corpus and hyperparameters towards the target task~\citep{howtogen, causal}, which challenges the promise of broad applicability of unsupervised pretraining.
%TODO: Wywalic ostatnie zdanie?

% Natural Language Processing remains one of the few major application areas of deep learning, where large scale pretraining is still a standard practice~\citep{bengio:review, yaog}. 
%Additionally, in many deep learning architectures, word representation, not sentence representation is the basic unit over which reasoning is performed~\citep{bahdanau}. 

%Word embeddings encode meaning in a simple algebraic structure~\citep{algebraic-int}, and this simple structure is then (usually) used to quickly learn on a target task (for instance sentiment classification). 

%Representation learning in NLP is also an important open problem and with the increasing amount of new approaches, d

%This transfer learning view seems to be more appropriate from the practitioner perspective. It also suggests that there are no general purpose word representations, indeed word embeddings are in practice tuned towards the end task, for instance by picking specific corpora, algorithm or hyperparameters.

% Transfer learning interpretation

%\section{Related work}

Evaluation methods of word embeddings can be roughly divided into two groups: extrinsic and intrinsic ~\citep{schnabel2015evaluation}. In the former approach embeddings are used in a downstream task (eg. POS tagging), while in the latter embeddings are tested directly for preserving syntactic of semantic relations. The most popular intrinsic task is Word Similarity (\WS) which evaluates how well dot product between two vectors reproduce score assigned by human annotators. Intrinsic evaluations always assume a very specific model for recovering given property.
% http://anthology.aclweb.org/W16-25#page=13

Despite popularity of word embeddings, there is no clear consensus what evaluation methods should be used, and both intrinsic and downstream evaluations are criticized~\citep{faraqui:qvec, 2016arXiv160502276F}. On top of that, different evaluation schemes usually lead to different rankings of embeddings~\citep{schnabel2015evaluation}. For instance, it has been shown~\citep{baroni2014don} that neural-based word embeddings perform consistently better then count-based models and later, using \WS and \WA tasks, it was argued otherwise~\citep{levy:improve}. Recent research in evaluation methods focuses on representative or interpretable set of tasks~\citep{2016nayak-veceval, kohn2015s}, analysing intrinsic evaluation~\citep{chiu2016intrinsic2, 2016arXiv160502276F}, as well as proposing improvements to intrinsic evaluation~\citep{impr_rel, tsvetkov2015evaluation}.

%In this paper we argue that intrinsic evaluations are a useful tool, but should be improved by focusing only on models that are learned from data. Resulting evaluation cannot be easily classified to either of these two groups. 

%http://edoc.sub.uni-hamburg.de/informatik/volltexte/2015/213/pdf/Koehn_whatsinanembedding.pdf
% https://www.ijcai.org/Proceedings/15/Papers/578.pdf
% Domain Xavier: http://svn.ucc.asn.au:8080/oxinabox/Uni%20Notes/honours/refTesting/glorot2011domain.pdf

%Transfer learning applications of representation learning has received a lot of attention~.

In this paper we employ a transfer learning view, in which the main goal of representation learning is to make subsequent learning fast, i.e. use resulting word embeddings to maximize performance at the lowest sample complexity possible~\citep{bengio:review, Glorot11domainadaptation}\footnote{Alternative goals might include maximizing interpretability, or analysing unsupervised corpora.}.  Surprisingly, researchers rarely report model (using given word representation) performance under varying (benchmark) dataset sizes and model classes\footnote{What is claimed here is that vast majority of papers doesn't take into consideration those factors. Nevertheless, there are notable exceptions~\citep{andreas2014much, qu2015big, emb2017}.}, which is crucial for correct evaluation of transfer, especially given increasing importance of small data regime applications.  Motivated by this, we propose an evaluation focused on data efficiency. To quantify precisely accessible information, we additionally propose focusing only on simple (supervised) tasks, as complex downstream tasks are challenging to interpret.  In addition, we propose principled improvements to \WS and \WA tasks, which try to address some of the critiques both benchmarks have received in the literature~\citep{2016arXiv160502276F}, in authors' opinion mostly due to their purely unsupervised nature.

% and tend to rely less directly on quality of word embeddings

%and the overall score depends mainly on the used architecture.

%To best knowledge of authors, this work is the first to focus on data efficiency in word embedding evaluation. 

%Recently it has been shown~\citep{hill:nmt} that word embeddings resulting from a specific deep architecture are superior at recovering true similarity between words which is revisited in this work.

\section{Proposal}\label{Sec:proposal}

Our main goal is to better align evaluation of word embeddings with their transfer application. Future performance is correlated with the amount of  \emph{easily} accessible and \emph{useful} information. By \emph{easily} accessible information, we mean information that model can quickly learn to use. \emph{Useful} information is defined as one that correlates well with the final task. 

First argument for data efficiency focused evaluation is the growing evidence that pretrained word embeddings provide little benefit under various settings, especially deep learning models~\citep{zhang2015character, zhang2015sensitivity, andreas2014much}. We hypothesize that most of the improvements (in downstream tasks) reported in literature are caused by small size of the supervised dataset, which is reasonable from the transfer learning point of view. Therefore, measuring performance after seeing just a subset of the supervised dataset is crucial for comparing word embeddings. Another argument is the empiricial difference between how \emph{easily} accessible is the information in various embeddings. As our experiment show,  commonly used dense representations achieve different learning speeds. This effect should be even stronger for sparse representations, for which feature dimensions can have very strict semantic meaning~\citep{DBLP:journals/corr/FaruquiD15}. An argument can be also made from theoretical point of view; it is easy to show that any injective (and thus not losing information) embedding preserves all information about corpora (see Appendix for details), i.e. having enough training data makes embeddings dispensable.

Second part of the proposal is to focus on simple supervised tasks to directly evaluate \emph{useful} information content. In certain applications, like tagging, choosing the right, specialized, word embeddings is crucial for obtaining state of the art results~\citep{DBLP:journals/corr/SharpSJCH16,lamplener}. We also confirm empirically that word embeddings trade off capacity between different information.  In this work we pose hypothesis, that specialization of word embeddings can be best evaluated by checking what simple information is most \emph{easily} recoverable.  While word level classification problems (like noun classification) were proposed previously~\citep{syntax}, here we also suggest including tests for recovery of relations between words (exemplified in experiments by \Similarity and \Analogy tasks) .

Importance of simple supervised tasks can be also seen in the light of algebraic structure that is encoded in word representation space. It has been observed in practice that word embeddings have \emph{useful} information only in a small subspace~\citep{sattigeri2016sparsifying, ultradense, nlls}. Thus, simple supervised tasks are closely aligned with the actual use of word embeddings and allow to quantify how quickly model can extract the most salient subspace (which leads to faster learning in general).

\begin{figure}
\begin{center}
\includegraphics[width=0.47\textwidth]{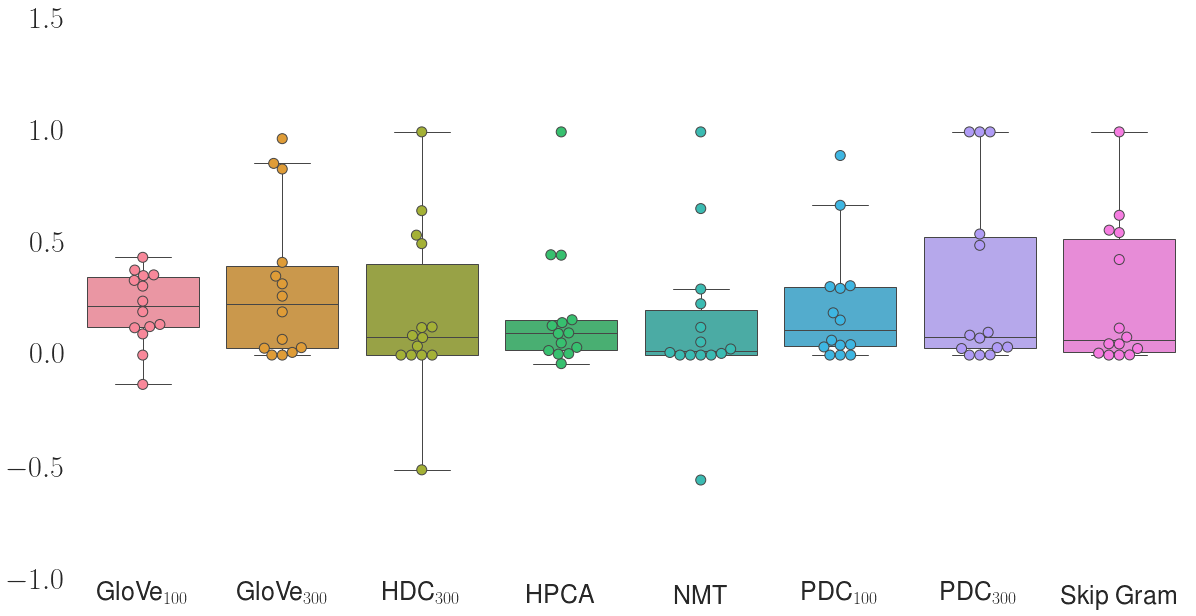}
\caption{Vertical axis is normalized difference between score of the best performing non-linear model and the best performing linear model (the higher, the better non-linear model is). Each box plot represents a single tested embedding.}
\label{Fig:dif_distr}
\end{center}
\end{figure}

% TODO: Include this paragraph somewhere
%Word embeddings are often tuned towards end task by picking the right learning algorithm and corpus~\citep{causally, howto}, which clearly shows there is no (and might never be) single best representation suited for all tasks. 

% TODO: Include reference to including subspace?

%Our main contribution is proposing novel evaluation oriented towards simple supervised tasks (which has been suggested to some extend before~\citep{syntax}) and data efficiency. Supervised versions of Word Similarity (\WS) and Word Analogy (\WA). 

% TODO: Include motivation behind \WA, \WS
%This was the motivation for the introduction of benchmarks like Word Similarity (\WS) or Word Analogy (\WA).

% TODO: Include this sentence
% Intuitively, both of them measure if given syntactic information is ``present`` in the embeddings. In this we argue that even for such simplified goal currently used evaluations are biased. 

%To continue discussion, let us define a \emph{task} as a pair of supervised dataset (word sentiment) and a model used to predict the label. 
%TODO: This last remark is pretty trivial
Our final remark is about diversity of models. Commonly used \WS and \WA datasets are solved by a \emph{constant} models, i.e. model which does not learn from the data. We argue that such evaluation is not generally informative. If we are interested in how well our vector space helps solving a given problem, we should in theory fit all possible models and pick the one that has the best generalization capabilities. While this is impractical, it illustrates that fixing one specific model gives answer to a different question, thus drawing general conclusions from it can be highly biased. A good rule of thumb might be to include representatives of typical model classes, or at least match the model with class of models we are interested in (which rarely will be \emph{constant}), which concludes our guidelines for a correct evaluation.

%\subsection{Supervised tasks}

% TODO: improve
%and it is important to move evaluation towards enabling asking very granular questions about available information, like noun of a word. 

%An important contribution of~\citep{mikolov} was showing, that learned word representation vectors support simple vector arithmetic. Later, models in various domains were also demonstrated to support this property, for instance allowing one to transform face property (like lighting) by adding a specific vector to the hidden representation~\citep{whatever-paper}.

%Measuring information ``available``  in the representation was the initial motivation for the introduction of benchmarks like Word Similarity (\WS) or Word Analogy (\WA), but without supervision the benchmarks are not informative. 

We leave out details from the proposal how to order embeddings, as this is determined by the specific research question given evaluation should answer. A sensible default is to report AUC of learning curve for each \emph{task}, and pick set of tasks that are most interesting to the researcher. 

%Final consideration is hyperparameter selection for word embedding algorithm. In this work we assume, that hyperparameters should not be tuned towards tasks. Word embeddings are evaluated towards being broadly applicable\footnote{Interestingly, this is one of the core arguments against purely unsupervised learning, i.e. that purely unsupervised learning is ill-posed because future application is not known during learning~\citep{nie-wiem-co}.} and tuning \emph{hyperparameters} (vectors can be tuned) towards a task is questionable, because in such scenario it is usually better to perform some sort of joint learning, and methods like semi-supervised embedding might be more suitable~\citep{semi-supervised}.

To summarize:

\begin{itemize}
\item Evaluation should focus on data efficiency (if transfer is the main goal of representation learning).
\item Tasks should be supervised and simple.
\item Unless focus is on specific application, evaluation should focus on a diverse set of models (including nonlinear and linear ones) and datasets (testing for various information content).
\end{itemize}

%Adhering to mentioned guidelines should reduce bias that in authors'
%pinion is currently present in the literature.  

If we follow those guidelines, we truly approximate (for a given trained embedding) generalization error under distribution of tasks, dataset size and classifiers,

\begin{equation}
\mathop{\mathbb{E}} \left[ \mathbf{L^{\mathbf{t}}}(\mbox{cl}(\mathbf{V}_{\mathbf{U}}(\mathbf{X_m^t}), \mathbf{Y_m^t})(\mathbf{V}_{\mathbf{U}}(\mathbf{x}), \mathbf{y}) \right],
\end{equation}
$\mathbf{V}_{\mathbf{U}}$ denotes the embeddings trainedw on $\mathbf{U}$, $\mathbf{L^t}$ denotes task $\mathbf{t}$ loss and expectation is taken with respect to:
\begin{itemize}
\item $p(\mbox{cl})$ -- distribution of classifiers,
\item $p(\mathbf{X^t_m}, \mathbf{Y^t_m})$ -- distribution of training datasets, where we first sample task $t$ and then uniformly sample dataset $\mathbf{X}^t_m$ of size $m$.
\item $\mathbf{x}, \mathbf{y}$ -- i.i.d. examples following training data distribution.
\end{itemize}

Distribution over classifiers and tasks \emph{should} be carefully tuned to researcher's needs, as we will argue soon. Further theoretical analysis is included in Appendix, and in the rest of the paper we present practical arguments for the proposed evaluation scheme.

\section{Experiments}

In this section we define specific metrics and tasks and perform exemplary evaluation of several pretrained embeddings in the advocated setting. Specifically, we try to empirically answer several questions, all geared towards providing experimental validation for the three main points of proposal:
\begin{itemize}
\item Do supervised versions of \WA and \WS benchmarks provide additional insights?
\item How stable is the ranking of embeddings under changing dataset size?
\item Are there embeddings that benefit from non-linear models?
\end{itemize}

The first question will aid understanding how useful are simple supervised tasks coupled with data efficiency. Second question shows that ranking of embeddings do change under transfer learning evaluation. Last question explores if any embeddings encode information in a ``non-linear'' fashion; while one of the main goals of representation learning is disentangling factors of variation, usually learned representations are entangled and dense, which poses interesting question how hard it is to extract useful patterns from them. In this paper we report only a subset of results with the most interesting conclusions, all results (along with code) are also made available online for further analysis.

\begin{figure}
\begin{center}
\includegraphics[width=0.46\textwidth]{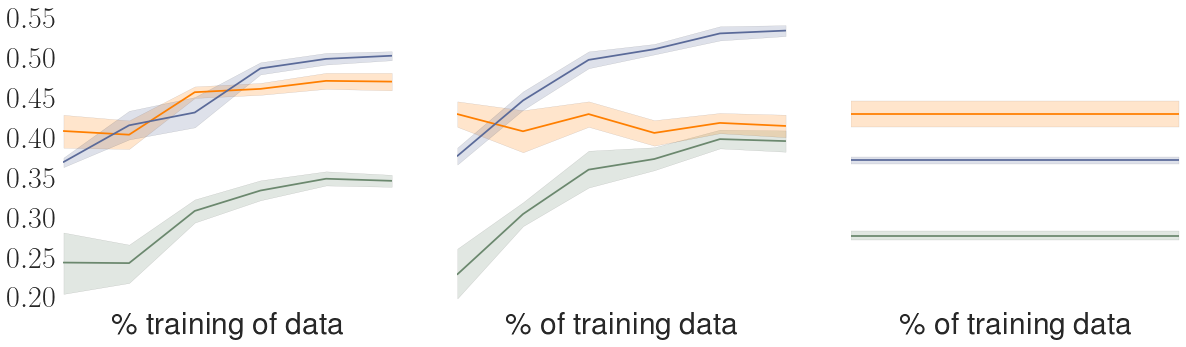}
\caption{Accuracy reached by 3 different models optimizing word similarity on SimLex dataset.  Rightmost plot deptics performance of the traditionally used \emph{constant} model. While NMT has been reported to have strong performance on SimLex (as shown on the rightmost plot), its relative gains diminish under supervised version of the benchmark (leftmost and central plot). Tested embeddings (ordered by performance on the rightmost plot): NMT, $\mbox{GloVe}_{300}$, $\mbox{HPCA}_{\mbox{autoenc}}$. 	}
\label{Fig:similarity_example}
\end{center}
\end{figure}

%\begin{itemize}\item Changing task class\item Changing model from linear to non-linear\item Changing model from constant to learning \end{itemize}

%It is expected, that rankings are unstable, as many authors have observed that word embeddings have different characteristics~\citep{schnabel2015evaluation}. 

%This part of work makes an argument for including various classes of models (which is quite often omitted) and including learning models for \WA and \WS tasks. This instability makes it challenging to evaluate properly embeddings, so either the number of tests is narrowed or we should devise a principled summary statistics. In the second part we propose to report a maximum curve, i.e. maximum accuracy calculated over different classes of models, which resembles more closely typical application of the learned representations. In the third part we discuss learnability of \WA and \WS tasks.

\subsection{Datasets and models}

Datasets are divided into 4 categories: \Similarity, \Analogy, \Sentence and \OneWord. \Analogy datasets are composed of quadruples (two pairs of words in a specific relation, for instance (king, queen, man, woman)). \Similarity datasets are composed of pairs of words and assigned mean rank by human annotators. \Sentence and \OneWord datasets have binary targets. In total our experimentation include 15 datasets:
\begin{itemize}
\item \Similarity: SimLex999~\citep{simlex}, MEN~\citep{MEN}, WordSimilarity353~\citep{ws353} and Rare Words~\citep{morpho}. 
\item \Analogy: 4 categories from WordRep~\citep{wordrep}\footnote{We experimented with MSR and Google datasets and observed that models easily overfit if the train and test sets share the same \emph{words} (not 3-tuples). WordRep dataset is a set of \emph{pairs} which we split into disjoint sets.}.
\item \Sentence: Stanford Sentiment Treebank~\citep{stanfordtb} and News20 (3 binary datasets) ~\citep{faraqui:qvec}.
\item \OneWord: Datasets constructed from lexicons collected in ~\citep{faruqui:nd}: POS tagging (3 datasets for verb, noun and adjective), word sentiment (1 dataset), word color association (1 dataset) and WordNet synset membership (2 datasets).
\end{itemize} 

Models for each datasets include both non-linear and linear variants.
When model is non-linear, for robustness we include in search a fallback to a simpler linear or \emph{constant} model.
Additionally, in the case of \Similarity and \Analogy we include commonly used \emph{constant} models.
Similarity between 2 vectors is approximated by their cosine similarity ($cos(\vec{v_1}, \vec{v_2})$).
In the case of \Analogy tasks embedding is evaluated for its ability to infer 4\ts{th} word out from the first three and we use the following
well-known \emph{constant} models:
\textsc{3CosAdd} ($\arg\max_{\vec{v} \in \mathcal{V}} cos(\vec{v}, \vec{v_2} - \vec{v_1} + \vec{v_3})$) and \textsc{3CosMul} ($\arg\max_{\vec{v} \in \mathcal{V}} \tfrac{ccos(\vec{v}, \vec{v_3}) ccos(\vec{v}, \vec{v_2})}{ccos(\vec{v}, \vec{v_1}) + \epsilon} $)\footnote{$ccos(\vec{v_1}, \vec{v_2}) = \frac{1 + \cos(\vec{v_1}, \vec{v_2})}{2}$}. For each task class we evaluate a different set of classifiers:

%, ,
\begin{itemize}
\setlength{\itemsep}{0cm}
\item \Similarity: cosine similarity, Random Forest (RF), Support Vector Regression (SVR) with RBF kernel\footnote{We also tried RankSVM~\citep{lee2014large}, but it did not perform better than other models, while being very computationally intensive.}.
\item \Analogy: \textsc{3CosAdd}, \textsc{3CosMul}~\citep{levy:improve} and regression neural network (performing regression on the 4\ts{th} word given the rest of the quadruple, see Appendix for further information).
\item \Sentence: Logistic Regression, Support Vector Machine (SVM) with RBF kernel taking as input averaged embedding vector and Convolutional Neural Network (CNN)~\citep{kim:cnn} taking as input concatenation of embedding vectors.
\item \OneWord: RF, SVM (with RBF kernel), Naive Bayes, k-Nearest Neighbor Classifier and Logistic Regression.
\end{itemize}

\subsection{Embeddings}

Our objective was to cover representatives of embeddings emerging from both shallow and deeper architectures. Deep embeddings are harder to train, so for the scope of this paper we decided to include pretrained and publicly available vectors\footnote{Vocabularies were lowercased and intersected before perfoming experiments. Vectors were normalized to a unit length.}. Setup includes following ``shallow'' pretrained embeddings: GloVe (100 and 300 dimensions)~\citep{pennington-socher-manning:2014:EMNLP2014}, Hellinger PCA (HPCA)~\citep{lebret:hpca}, PDC (100 and 300 dimensions) and HDC (300 dimensions)~\citep{pdc}, Additionally following ``deep'' embeddings are evaluated: Neural Translation Machine (NMT, activations of the deep model are extracted as word embeddings) ~\citep{hill:nmt}, morphological embeddings (morph, which can learn morphological differences between words directly)~\citep{morpho} and HPCA variant trained using autoencoder architecture~\citep{lebret:hpca_auto}. In some experiments we additionally include publicly available pretrained skip-gram embeddings on Google News corpora and skip-n-gram embeddings trained on Wikipedia corpora~\citep{Ling:2015:naacl} (used commonly in syntax demanding tasks, like tagging).

\subsection{Results}

%As described previously, \emph{task} consists of a model and a dataset. 
For each dataset we first randomly select test set and run evaluation for increasing sizes of training dataset, thus scores approximate generalization error after seeing increasing amounts of data. Splits are repeated 6 times in total to reduce noise. Thus, for each \emph{task} results are 6 learning curves, with a score for each subset of data (see Fig.~\ref{Fig:similarity_example}).

Ranks of embeddings at each point are calculated using a greedy sequential procedure, where we assign embeddings the same rank if their scores (each point on the curve is represented by 6 scores) are not significantly different, as tested using pairwise ANOVA test. All results are available online\footnote{Results will be posted online upon publication.}.

%\begin{table}
%\small
%\begin{center}
%\begin{tabular}{ll}
%\toprule
%{} &                  0 \\
%\midrule
%\Analogy    &  $ 0.55 \pm 0.33 $ \\
%\OneWord    &  $ 0.64 \pm 0.29 $ \\
%\Sentence   &  $ 0.68 \pm 0.33 $ \\
%\Similarity &  $ 0.61 \pm 0.31 $ \\
%All         &  $ 0.59 \pm 0.22 $ \\
%\bottomrule
%\end{tabular}
%\caption{Mean absolute value of rank change for different categories }
%\label{Table:rank_change}
%\end{center}
%\end{table}

\begin{table} 
\tiny
\begin{center}
\begin{tabular}{lllll}
\toprule
{} &       rank start &         rank end &         auc rank \\
\midrule
$\mbox{GloVe}_{100}$           &  $2.09 \pm 1.76$ &  $2.48 \pm 1.55$ &  $2.17 \pm 1.77$ \\
$\mbox{GloVe}_{300}$           &  $1.83 \pm 2.63$ &  $1.91 \pm 2.54$ &  $1.87 \pm 2.57$ \\
$\mbox{HDC}_{300}$             &  $0.93 \pm 1.34$ &  $1.06 \pm 1.22$ &  $0.91 \pm 1.40$ \\
$\mbox{HPCA}_{\mbox{autoenc}}$ &  $3.52 \pm 2.04$ &  $3.68 \pm 1.67$ &  $3.65 \pm 2.03$ \\
HPCA                           &  $4.77 \pm 2.33$ &  $4.75 \pm 2.23$ &  $4.83 \pm 2.29$  \\
NMT                            &  $3.90 \pm 2.24$ &  $4.32 \pm 1.93$ &  $4.28 \pm 2.34$ & \\
$\mbox{PDC}_{300}$             &  $0.64 \pm 0.89$ &  $0.74 \pm 1.07$ &  $0.65 \pm 1.10$ \\
morphoRNNLM                    &  $4.13 \pm 2.34$ &  $4.55 \pm 2.02$ &  $4.26 \pm 2.29$  \\
\bottomrule
\end{tabular}
\caption{First two column present ranks at the 30\% and 100\% splits of evaluation dataset averaged over all categories. Third column is rank computed by recommended default AUC of curve. As expected, when averaging over many tasks (of different dataset sizes), data efficiency is not changing final ordering (third column). On average rank increases as embeddings are becoming  significantly different (as determined by ANOVA during rank computation).}
%  and last column presents difference of ranks between 30\% and 100\% if we just look at best model for each embedding.
\label{Table:abs_rank_change}
\end{center}
\end{table}

\begin{table}
\small
\begin{center}
\begin{tabular}{llll}
\toprule
{} & \textsc{3CosAdd} & \textsc{3CosMul} &               NN \\
\midrule
SimilarTo   &    $1 \%$ / $1 \%$ &    $0 \%$ / $0 \%$ &    $1 \%$ / $1 \%$ \\
InstanceOf  &    $1 \%$ / $1 \%$ &    $1 \%$ / $1 \%$ &  $22 \%$ / $26 \%$ \\
Antonym     &  $14 \%$ / $14 \%$ &  $13 \%$ / $13 \%$ &  $16 \%$ / $18 \%$ \\
DerivedFrom &    $4 \%$ / $4 \%$ &    $3 \%$ / $3 \%$ &   $8 \%$ / $10 \%$ \\
\bottomrule
\end{tabular}

\caption{Test accuracy at $30\%$ and $100\%$ training data achieved on different \Analogy benchmarks using 3 different models (only NN is supervised), maximized over embeddings. Low \emph{constant} model scores are similar to numbers reported in~\citep{wordrep}.}
\label{Table:analogy_results}
\end{center}
\end{table}

%, and last column presents difference of ranks between 30\% and 100\% if we just look at best model for each embedding

\subsection{Learnable \Similarity and \Analogy tasks}

Our first question was validating that adding learnable \Similarity and \Analogy tasks introduce any additional insights. Positive answer to this question motivates introduction of simple tasks with varying dataset size, ideally defined on single or pair of words.

%Our first question was whether datasets, typically evaluated in a \emph{static} context, can be reused in a \emph{dynamic} context, and if introducing such change shows anything new. Answer to both of those is positive. 

For solving \Analogy we implemented a shallow neural network.
Interestingly, WordRep authors~\citep{wordrep} reported low accuracies (often even below 5\%) on most
analogy questions and we were able to improve absolute score upon the tested subset on average by absolute 11\% (see Tab.~\ref{Table:analogy_results}).
Having learnable models for \Similarity and \Analogy datasets enables reusing many publicly available datasets in the new context. Also, we can robustly evaluate if given information about relation between two words is present in the embedding based on analogy questions. In the case of \textsc{HasContext} and \textsc{InstanceOf} datasets  no embeddings can recover analogy answers using static models (achieved accuracy is below $3\%$), but actually some embeddings \emph{do} have information about the relations. In both cases HDC consistently outperforms other embeddings reaching around $25\%$ accuracy, see Fig.~\ref{Fig:analogy_example} and Tab.~\ref{Table:analogy_results}.

In the case of \Similarity dataset the best performing model was Support Vector Regression, similarly as in \Analogy datasets we also improve over the \emph{constant} models. What is more, we can draw novel conclusions. Interesting example is NMT performance on SimLex. It was claimed in~\citep{hill:nmt} that NMT embeddings are better at encoding \emph{true} similarity between words, but SVR on Glove embeddings performs better after training on the whole dataset (i.e. at the end of learning curve), see Fig.\ref{Fig:similarity_example}.

\begin{table}
\tiny
\begin{center}
\begin{tabular}{llll}
\toprule
{} &       rank start &         rank end &         auc rank  \\
\midrule
$\mbox{GloVe}_{100}$           &  $1.80 \pm 2.04$ &  $2.31 \pm 1.83$ &  $1.86 \pm 2.07$  \\
$\mbox{GloVe}_{300}$           &  $3.06 \pm 3.17$ &  $3.14 \pm 2.99$ &  $3.03 \pm 3.09$ \\
$\mbox{HDC}_{300}$             &  $1.37 \pm 1.68$ &  $1.37 \pm 1.40$ &  $1.31 \pm 1.69$  \\
$\mbox{HPCA}_{\mbox{autoenc}}$ &  $3.63 \pm 2.30$ &  $3.80 \pm 1.98$ &  $3.83 \pm 2.28$ \\
HPCA                           &  $3.77 \pm 2.77$ &  $3.60 \pm 2.58$ &  $3.80 \pm 2.79$ \\
NMT                            &  $3.74 \pm 2.05$ &  $4.29 \pm 1.74$ &  $4.23 \pm 2.30$  \\
$\mbox{PDC}_{300}$             &  $0.60 \pm 0.77$ &  $0.60 \pm 1.01$ &  $0.51 \pm 0.95$ \\
morphoRNNLM                    &  $3.00 \pm 2.38$ &  $3.60 \pm 2.16$ &  $3.17 \pm 2.38$  \\
\bottomrule
\end{tabular}
\caption{First two column present ranks at the 30\% and 100\% splits of evaluation dataset averaged over all \emph{tasks} for \OneWord datasets. Third column is rank computed by recommended default AUC of curve. Interestingly, information stored in \OneWord datasets is easily accessible even with small amount of data.}
\label{Table:abs_rank_change_ow}
\end{center}
\end{table}

\subsection{Rank stability under changing dataset size}

Second question was how stable are the orderings under growing dataset. To this end we have measured rank at the beginning ($30\%$ of data) and end of training. Mean absolute value of change of ranking is approximately (averaged over all categories) $0.6$ with standard deviation of $0.2$. This means that usually an embeddings has a changed rank after training, which establishes usefulness of measuring data efficiency for the tested embeddings.

Interestingly, when averaged over many experiments, final \emph{ordering} of embeddings tends to be similar, see Tab.~\ref{Table:abs_rank_change} and Fig.~\ref{Fig:instability}. This is mainly because (tested) embeddings have different data efficiency properties for different tasks, i.e. none of embeddings is consistently more data efficient than others.  On top of that standard deviation of both rank at the end and beginning is around $2.5$, which further reinforces findings from~\citep{schnabel2015evaluation} that embeddings orderings are very \emph{task} dependent.

Measuring data efficiency is crucial for a realistic (i.e. as close to application as possible) evaluation of representations. Besides a more accurate and practical ordering of embeddings, it also allows one to draw new conclusions, which is exemplified by differences between $\mbox{GloVe}_{100}$ and $\mbox{GloVe}_{300}$ (elaborated on in the next section). Another interesting point is that rank change after training on full dataset is relatively low for \OneWord datasets, which suggests that simple information about words like noun or verb is always quickly accessible to models, but more complicated information like relationships or similarities between pair of words are not, see Tab.~\ref{Table:abs_rank_change_ow}.

\subsection{Linear vs non--linear models}

Our last question was how stable is the ordering under changing model type. More specifically, are there embeddings especially fitted for use with linear models?
Clearly some embeddings in fact are, see Fig.~\ref{Fig:dif_distr}. This is an important empirical fact for practictioners, which motivates including such evaluation in experiments. In particular, it clearly shows that typically used evaluation does not answer the question ``is there information about task X in the embedding Y'' but only ``is information about task X stored in embedding Y easily separable by a static (or linear)  classifier''.

%It suggests that embeddings behave differently under different architectures, which is usually not reported in papers presenting new word embedding algorithms, and thus significantly biasing results towards embeddings linearizing more factors of variation. Obviously, linearization is a feature which might be of high interest, thus searching for embeddings with such characteristics is a vital research direction. The only claim here is that it is an important, implicit, assumption about the problem at hand. 

An illustrative example is the difference in performance between two pretrained GloVe embeddings of different dimensionality (100 and 300). It has been shown previously that lower dimensional GloVe embeddings are better at syntactic tasks~\citep{laigenerate}, but our evaluation reveals more complicated picture, that GloVe objective might encourage some sort of nonlinear encoding. We can see that by significantly better rank of $\mbox{GloVe}_{100}$ at the beginning of learning of \OneWord datasets (mean rank $1.8$), but lower at the end (mean rank $2.3$), see Tab.~\ref{Table:abs_rank_change_ow}. This is also visible when averaged over all categories, see Tab.~\ref{Table:abs_rank_change}. 
% Repeated
%Another evidence is that non-linear models perform slightly worse on  $\mbox{GloVe}_{100}$ than  $\mbox{GloVe}_{300}$, see Fig.~\ref{Fig:dif_distr}.

\subsection{Discussion}

Performed eperiments show usefulness of the additional and more granular level of analysis enabled. Researcher can ask more precise questions, like ``is it worth fitting syntax specific embeddings even when supervised dataset size is large?'' (to which answer is positive based on our experiments) or ``is \textsc{HasInstance} relation encoded in the space?'' (to which answer is also positive for some embeddings). Unfortunately, there is already a large volatility of final embeddings ordering when using standard evaluation, and our proposed scheme at times makes it even more challenging to decide which embeddings are optimal.  This hints, that purely unsupervised large scale pretraining might not be suitable for NLP applications. Most importantly, evaluation should be more targeted, either at some specific application area, or at specific properties of representation. 

%Looking ahead, one should be able to steer learning towards optimizing generalization on a set of tasks that look \emph{a priori} useful to the end supervised task.  Another possibility is that current embeddings are too simple, and we might need more complex, for instance hierarchical ones.

%Simplicity of most benchmarks considered so far in word embedding research suggests, that o

%There is a crucial question of evaluating is the set of \emph{informations} we want to test for being easily accessible.

One of the presented arguments for including supervised models for testing \emph{information content} is algebraic interpretation of word embeddings~\citep{arora2016linear}. The algebraic structure present in the representation space enables one to decompose word embedding space into a  set of \emph{concepts} (so each word vector can be well approximated by a sum of few \emph{concepts})\footnote{This decomposition can be obtained using standard methods like k-SVD.}.  Theoretically, tasks defined on single words should test for existence of such \emph{concepts}, but in our case including (supervised) \Analogy tasks was very useful, as those tasks are still very challenging for current embeddings. For \Analogy tasks (see Fig.~\ref{Fig:analogy_example}) achieved accuracy scores are below 25\%, whereas in the case of \OneWord average accuracy is around $80\%$ (and fitting classifier adds on average only $2\%$). These higher order (or subspace) focused tasks are also well aligned with the application of word embeddings, because empirically models tend to focus on a small subspace in the vector space.

\begin{figure}
\begin{center}
\includegraphics[width=0.5\textwidth]{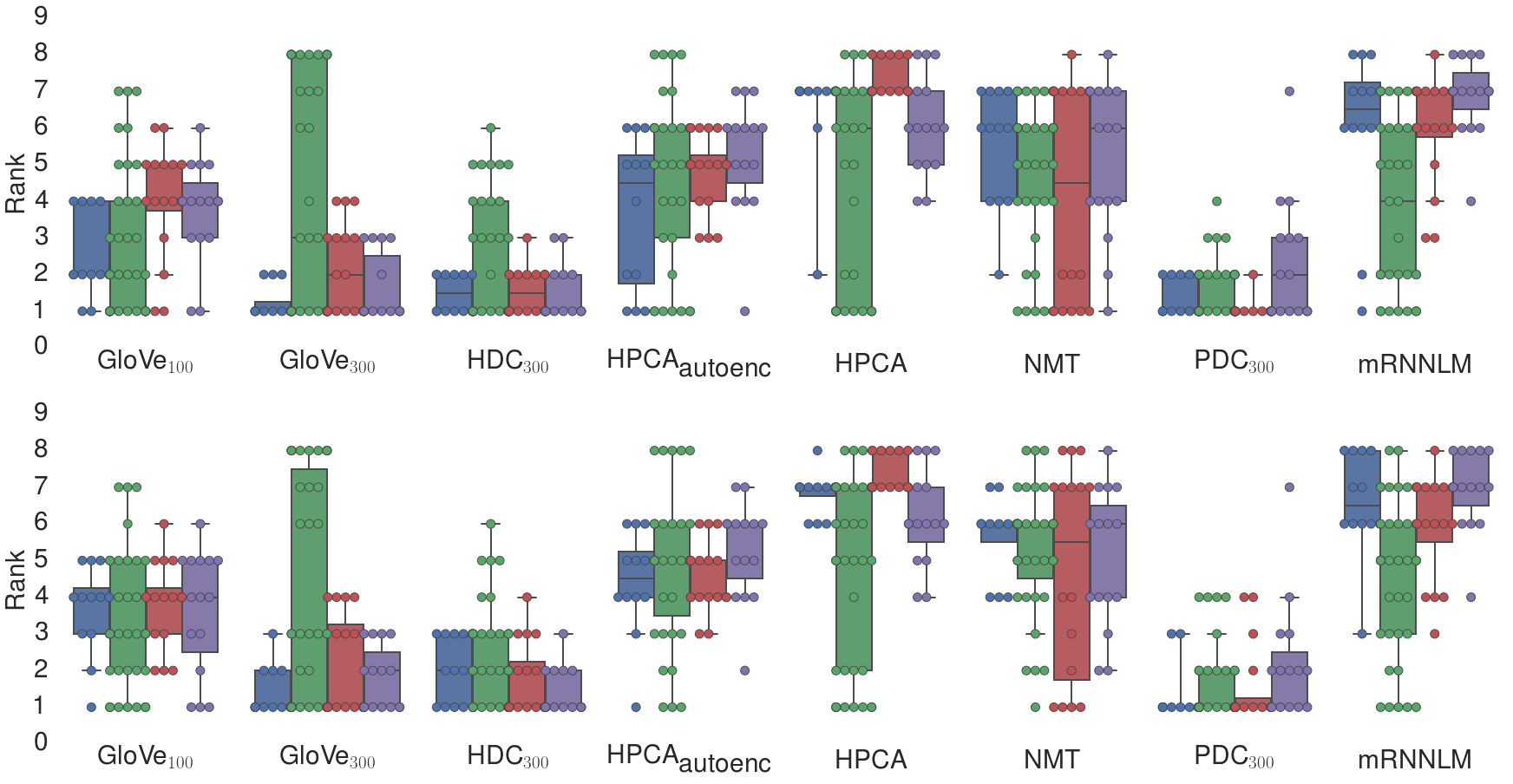}
\caption{Rank at the beginning and end of learning for a subset of tested embedding (horizontal axis) and category (color). Task categories from left to right are: \Sentence (blue), \OneWord (green), \Similarity (red) and \Analogy (purple). }
\label{Fig:instability}
\end{center}
\end{figure}

\begin{figure}
\begin{center}
\includegraphics[width=0.45\textwidth]{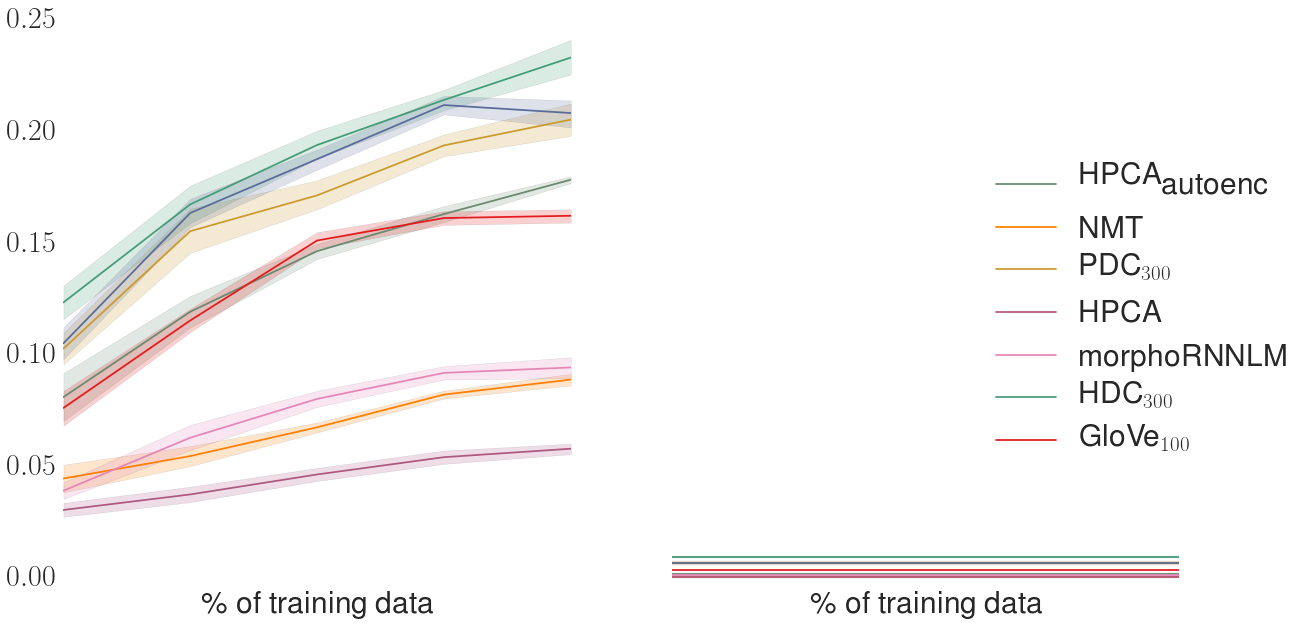}
\caption{Accuracy reached by Neural Network regression model (left plot) and \textsc{3CosMul}  (right plot) optimizing analogy (\textsc{HasContext} relation from WordRep benchmark~\citep{wordrep}). }
\label{Fig:analogy_example}
\end{center}
\end{figure}

%\begin{table} 
%\begin{center}
%%\tiny
%\begin{tabular}{llll}
%\toprule
%{} &           static &          dynamic &     best dynamic \\
%\midrule
%$\mbox{GloVe}_{100}$           &  $3.71 \pm 1.33$ &  $3.88 \pm 1.57$ &  $3.84 \pm 1.86$ \\
%$\mbox{GloVe}_{300}$           &  $5.36 \pm 1.22$ &  $4.24 \pm 2.50$ &  $4.74 \pm 2.23$ \\
%$\mbox{HDC}_{300}$             &  $6.21 \pm 0.97$ &  $5.23 \pm 1.79$ &  $6.32 \pm 1.53$ \\
%$\mbox{HPCA}_{\mbox{autoenc}}$ &  $2.29 \pm 0.99$ &  $2.77 \pm 1.55$ &  $2.68 \pm 1.20$ \\
%HPCA                           &  $1.43 \pm 1.45$ &  $1.88 \pm 2.78$ &  $1.68 \pm 2.60$ \\
%NMT                            &  $2.93 \pm 2.62$ &  $2.00 \pm 2.13$ &  $1.63 \pm 1.77$ \\
%$\mbox{PDC}_{100}$             &  $4.50 \pm 1.99$ &  $4.67 \pm 1.48$ &  $4.47 \pm 1.50$ \\
%$\mbox{PDC}_{300}$             &  $6.36 \pm 2.31$ &  $5.41 \pm 2.12$ &  $5.89 \pm 2.05$ \\
%morphoRNNLM                    &  $0.79 \pm 0.97$ &  $2.07 \pm 2.35$ &  $1.37 \pm 1.46$ \\
%\bottomrule
%\end{tabular}
%\caption{Mean ranking averaged over all tasks using \emph{static} models, any \emph{dynamic} (middle column) or only best performing \emph{dynamic} model (last column) after seeing 100\% of training data.}
%\label{Table:rank_distr}
%\end{center}
%\end{table}

\section{Conclusions} % = So what??

%In this paper we proposed simple, yet principled improvements over traditional word embedding evaluation methods. 

As exemplified by experiments, proposed evaluation reveals differences between embeddings along usually overlooked dimensions: data efficiency, non-linearity of downstream model and simple supervised tasks (including recovery of higher order relations between words). Interesting new conclusions can be reached, including differences between different size GloVe embeddings or performance of non-linear models on similarity benchmarks. 

% TODO: reframe
Additionally, obtained results reinforce conclusions from other published studies that there are no universally good embeddings and finding such might not be achievable, or a well posed problem. One should take great care when designing evaluation and specify what is the main focus. For instance, if the main goal of the word embeddings is to be useful in transfer, one should include  advocated data efficiency metrics. New word embedding algorithms are moving away from typical pretraining scheme, with increasing focus on specialized word embeddings and applications under very limited dataset size, where fast learning is crucial. We hope that proposed evaluation methodology will help advance research in these scenarios.

\bibliographystyle{plainnat}
\bibliography{paper}

\appendix

\section{Theoretical analysis}

Let us first try to answer the question what is the \textit{information about the task} and how can it be measured given some data representation. For simplicity let us assume that task is a binary classification, but the same reasoning applies to multiclass, multilabel, regressions etc. A quite natural, machine learning perspective is to define information stored as a Bayes risk of optimal model trained to perform this task. Obviously, raw representation already has some non-negative Bayes risk, which cannot be reduced during any embedding. Actually, it is quite easy to show, that nearly every embedding preserves all the information contained in the source representation.

\begin{theorem}
Every injective embedding preserves all the information about the task.
\end{theorem}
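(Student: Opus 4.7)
The plan is to read ``information about the task'' as the Bayes risk for the representation, as the paragraph preceding the statement does, and then show that an injective embedding cannot change this risk. Concretely, fix a joint distribution on pairs $(x, t)$ where $x$ is the source representation and $t$ is the binary task label, and let $f$ be the injective embedding sending $x$ to $f(x)$. Define
\begin{equation}
R^\star(x) = \inf_{h} \mathbb{E}[\ell(h(x), t)], \qquad R^\star(f(x)) = \inf_{g} \mathbb{E}[\ell(g(f(x)), t)],
\end{equation}
where the infima range over all (measurable) classifiers on the respective spaces and $\ell$ is the 0/1 loss. The goal is to show $R^\star(x) = R^\star(f(x))$.

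First I would establish $R^\star(f(x)) \le R^\star(x)$. Because $f$ is injective, it admits a left inverse $f^{-1}$ defined on its image $f(X)$ with $f^{-1}(f(x)) = x$. Then for any classifier $h$ on the source space, $g := h \circ f^{-1}$ is a classifier on $f(X)$ with identical loss, so the infimum over $g$ is at most the infimum over $h$. For the reverse inequality $R^\star(x) \le R^\star(f(x))$, the same trick runs the other way: any classifier $g$ on $f(X)$ yields $h := g \circ f$ on the source space with the same loss. Combining the two inequalities gives equality of Bayes risks, which is precisely the claim that no information about the task is lost.

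The main obstacle is measure-theoretic: if the source space is equipped with a $\sigma$-algebra and classifiers are required to be measurable, then the left inverse $f^{-1}$ must be measurable on $f(X)$ for the composition $h \circ f^{-1}$ to count as an admissible classifier in the second risk. In the paper's likely setting (finite vocabulary, so embeddings are maps from a finite or countable set into $\mathbb{R}^d$) this is automatic, and I would simply note that explicitly. In the fully general setting one would either assume $f$ is bi-measurable or invoke a standard Borel-space fact (e.g.\ Lusin/Kuratowski) that an injective measurable map between standard Borel spaces has a measurable left inverse on its image. I would state the result in the discrete setting that matches the paper and mention the Borel extension as a remark, since spelling out the measurability argument is the only nontrivial part and is tangential to the paper's message.
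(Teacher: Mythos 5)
Your proposal is correct and follows essentially the same route as the paper: both hinge on the observation that injectivity gives an inverse $\mathcal{E}^{-1}$ on the image, so classifiers can be transported between the two spaces without changing their risk. You are somewhat more careful than the paper, which only explicitly constructs the pushed-forward classifier $o' = o \circ \mathcal{E}^{-1}$ and thus formally establishes only one of the two inequalities (the other, that an embedding cannot reduce Bayes risk, is asserted in the surrounding text); your explicit two-inequality argument and the measurability remark for the discrete/finite vocabulary setting tighten this up but do not change the underlying idea.
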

\begin{proof}

Let us assume that Bayesian optimal classifier (the one obtaining the Bayes risk $\mathcal{R}_\mathcal{X}$) on the input space $\mathcal{X}$ be called $o$. Furthermore, let our embedding (learned in arbitrary manner, supervised or not) be a function $\mathcal{E} : \mathcal{X} \rightarrow \mathcal{X}'$. According to assumptions, $\mathcal{E}$ is injective, thus for every $x \in \mathcal{X}$ there exists unique $x' \in \mathcal{X}'$ such that $\mathcal{E}(x) = x'$. Let us call the corresponding inverse assignment $\mathcal{E}^{-1}$ (defined only on the image of $\mathcal{E}$). Consequently we can define classifier on $\mathcal{E}(X) \subset \mathcal{X}'$ through $o'(x') = o(\mathcal{E}^{-1}(x')).$
It is now easy to show, that Bayes risks of these two models are exactly the same
\begin{equation*}
\begin{aligned}
\mathcal{R}_\mathcal{X} &= \int \sum_y \ell (o(x), y) p(y|x) p(x) dx \\
&= \int \sum_y \ell (o(\mathcal{E}^{-1}(\mathcal{E}(x))), y) p(y|x) p(x) dx\\
&=\int \sum_y \ell ( o'(\mathcal{E}(x), y) p(y|x) p(x) dx\\
&= \mathcal{R}_{\mathcal{X}'}. 
\end{aligned}
\end{equation*}
\end{proof}

This remains an open question how frequent in general are injective embeddings. If one considers continuous spaces as $\mathcal{X}$ then this is extremely small class of functions (especially if $\text{dim}(\mathcal{X}') \leq \text{dim}(\mathcal{X})$). However in case of natural language processing (and many other fields), the input space is actually discrete or even finite. In such case, non-injective embeddings are rare phenomenon. In particular, for any finite set, probability of selecting at random linear projection which gives non-injective embedding is zero.

The above reasoning is in some sense trivial, yet still worth underlying, as it gives an important notion of what should be measured when evaluating embeddings. Even though Bayes risk is the same for both spaces, the complexity of inferring $o'$ can be completely different from complexity of inferring $o$. We argue, that this is a crucial element - to measure how hard is to learn $o'$ (or any reasonable approximation). There are two basic dimensions of such analysis:
\begin{itemize}
\item check how complex the set of hypotheses $\mathcal{H} \ni o'$ needs to be  in order to be able to find it using given data,
\item verify how well one can approximate $o'$ as a function of growing training size. In other words - how fast an estimator of $o'$ converges.
\end{itemize}

%and is almost always considered in the case of Transfer Learning
Thus, to really distinguish various embeddings, we should rather ask what is the best achievable performance under limited amount of data or under constrained class of models, which is theoretical argument for data efficiency oriented evaluation.

%Following this line of thought, we propose the evaluation protocol should report score under varying sizes of the \emph{evaluation} dataset, thus forming a learning curve (as in Fig.~\ref{Fig:similarity_example}), which is very commonly used in Transfer Learning research. 

%This simulates application of the embedding under different situations. Many examples of currently used benchmarks, like Word Similarity (\WS) and Word Analogy (\WA) tasks, violate these guidelines.

\section{Regression neural network for word analogy task}

Let $D$ be the dimension of a given word embedding. We assume that all embedded words have euclidean norm equal to one -- this guarantees that
the scalar product of two embedded words is also their cosine similarity.
The word analogy task is defined in the following way: given (embedded) words 
$v_1$, $v_2$ and $v_3$, predict word $v_4$ that satisfies analogy ``$v_1$ is related to $v_2$ as $v_3$ is related to $v_4$''.
Our estimator of $v_4$ is defined as:
\[\widehat{v_4} = \frac{-W_1 v_1 + W_2 v_2 + W_3 v_3 + b}{\lVert -W_1 v_1 + W_2 v_2 + W_3 v_3 + b \rVert_{2}}\]
where the model trainable parameters are:
\begin{itemize}
\item $W_1$, $W_2$ and $W_3$ -- $D\times D$ matrices initialized with identities,
\item $b$ -- $D$-dimensional vector initialized with zeros,
\end{itemize}
and the cost is defined as:
\[-\sum_{j} \langle v^j_4 , \widehat{v^j_4}\rangle.\]
The model was trained with gradient descent optimization on minibatches.
Hyperparameters: learning rate, number of epochs, optimizer, batch size and (boolean) fallback to \emph{constant} model were chosen using cross-validation.
The actual prediction has two steps:
\begin{itemize}
\item calculate $\widehat{v_4}$,
\item choose (embedded) word $v$ that minimizes $\langle v , \widehat{v_4} \rangle$.
\end{itemize}
Observe that this model is initialized in such a way, that it is equivalent to \textsc{3CosAdd} -- the idea is to check,
if applying trainable \emph{affine} transformations to input vectors would boost \textsc{3CosAdd} performance. It should
also be noted that this approach turned out to be very computationally intensive.

\end{document}